\newtheorem{example}{Example}
\newtheorem{theorem}{Theorem}
\newtheorem{remark}{Remark}
\newcommand{\BibTeX}{B\kern-.05em{\sc i\kern-.025em b}\kern-.08em\TeX}
\begin{document}


\begin{frontmatter}


\paperid{123} 


\title{BCR-DRL: Behavior- and Context-aware Reward\\for Deep Reinforcement Learning in Human-AI Coordination}


\author[A]{\fnms{Xin}~\snm{Hao}\orcid{0000-0003-1577-2620}\thanks{Corresponding Author. Email: haoxin1022@hotmail.com}}
\author[A]{\fnms{Bahareh}~\snm{Nakisa}}
\author[B]{\fnms{Mohmmad Naim}~\snm{Rastgoo}} 
\author[C]{\fnms{Gaoyang}~\snm{Pang}} 

\address[A]{Deakin University}
\address[B]{Monash University}
\address[C]{The University of Sydney}


\begin{abstract}
Deep reinforcement Learning (DRL) offers a powerful framework for training AI agents to coordinate with human partners. However, DRL faces two critical challenges in human-AI coordination (HAIC): sparse rewards and unpredictable human behaviors. These challenges significantly limit DRL to identify effective coordination policies, due to its impaired capability of optimizing exploration and exploitation. To address these limitations, we propose an innovative behavior- and context-aware reward (BCR) for DRL, which optimizes exploration and exploitation by leveraging human behaviors and contextual information in HAIC. Our BCR consists of two components: (i)~A novel dual intrinsic rewarding scheme to enhance exploration. This scheme composes an AI self-motivated intrinsic reward and a human-motivated intrinsic reward, which are designed to increase the capture of sparse rewards by a logarithmic-based strategy; and (ii)~A new context-aware weighting mechanism for the designed rewards to improve exploitation. This mechanism helps the AI agent prioritize actions that better coordinate with the human partner by utilizing contextual information that can reflect the evolution of learning. Extensive simulations in the Overcooked environment demonstrate that our approach can increase the cumulative sparse rewards by approximately $20\%$, and improve the sample efficiency by around $38\%$ compared to state-of-the-art baselines.
%
\end{abstract}

\end{frontmatter}


\section{Introduction}
Human-AI coordination (HAIC) has emerged as a critical research area focusing on complicated tasks requiring coordinated behaviors, including the synergy of human intuition and machine autonomy~\cite{NeurIPS2019_Overcooked_GitHub,PantheonRL_aaai2022}. For instance, in the Overcooked environment, an AI agent must coordinate with a human partner to prepare meals by dividing tasks such as chopping ingredients and delivering dishes, requiring adaptive responses to human actions (detailed in Section 3.2). This pressing need demands an AI agent that is adaptive to the human partner in addition to the environment. Deep reinforcement learning (DRL) offers a powerful framework for developing such an adaptive AI agent~\cite{Xin_BCDRL_TCOM,Xin_BCDRL_ICC2024}, leveraging its ability to learn optimal policies through interactions with both the human partner and the environment~\cite{AAAI_overcooked_Tsinghua,IJCAI2024_HRL_Overcooked}. However, employing DRL in HAIC presents two interdependent challenging goals in balancing exploration and exploitation:

    \textit{\textbf{i) Enhancing exploration of critical but rare state-action pairs yielding sparse rewards.}} 
    Rewards are usually sparse in HAIC due to the inherent complexity of coordinated tasks. The state-action pairs associated with sparse rewards in HAIC are critical for effective learning. However, these pairs are rarely encountered during training, since they emerge from coordinated sequences of temporally-extended actions between AI and human agents~\cite{combine_human_aaai_2019}--making them particularly elusive when the AI agent has limited ability to fully interpret the intricate dynamics of unpredictable human behaviors~\cite{multiple_feedback_aaai_2023,balance_human_aaai_2020}. These rare interactions represent key moments when AI actions perfectly align with human intentions, leading to successful task completion. Without sufficient exposure to these critical state-action pairs, the AI agent struggles to understand which actions are beneficial, resulting in slow learning and local optima performance. Existing approaches mitigate this issue by augmenting exploration through intrinsic rewards that supplement those sparse rewards obtained extrinsically from the environment~\cite{ICML2019_Social_MARL_GitHub,max_entropy_reward}.

\begin{figure}[!t]
    \centering
    \includegraphics[width=3.4in]{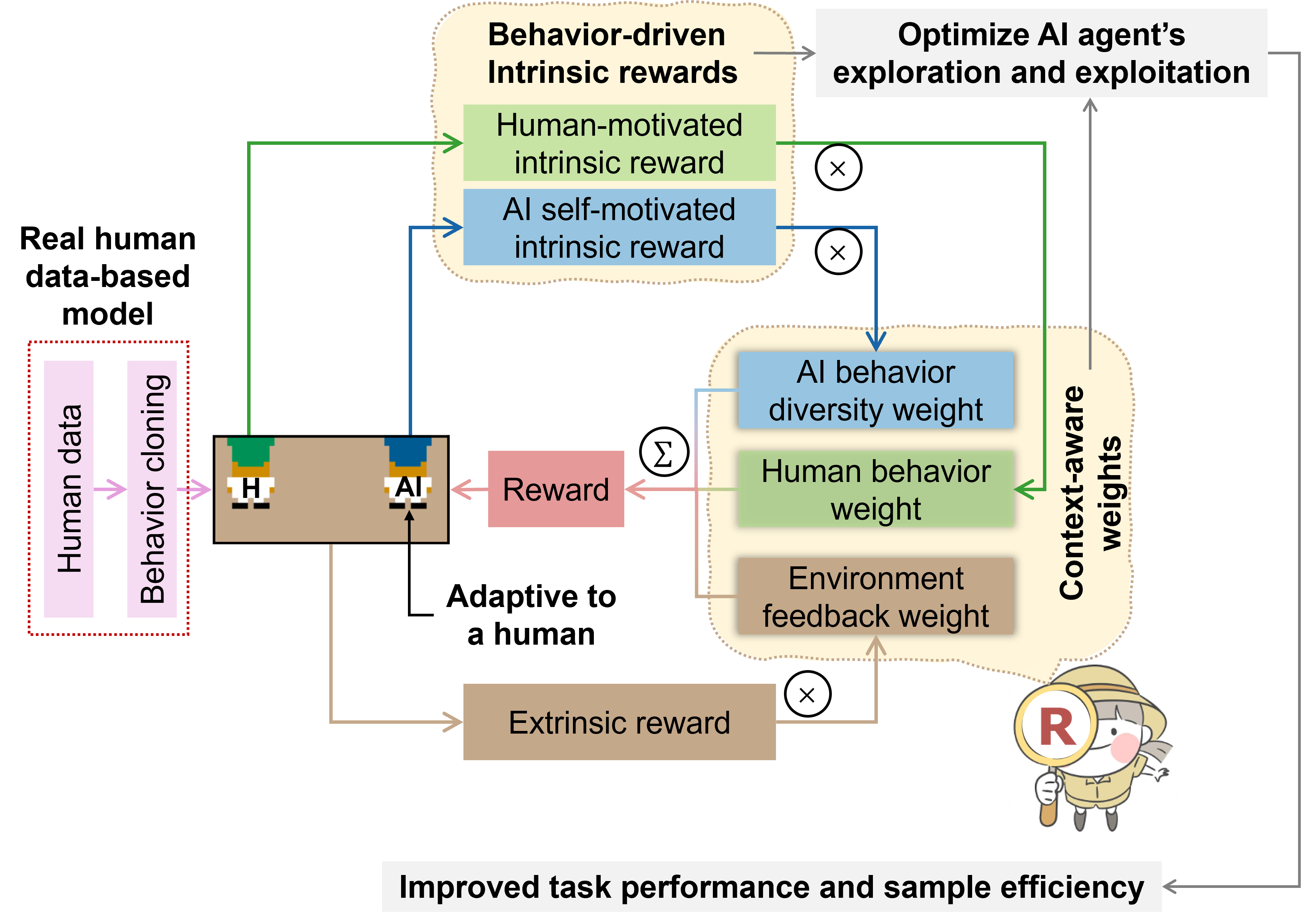}
    \caption{The proposed BCR-DRL for HAIC. The AI agent is trained by our BCR-DRL algorithm, which integrates extrinsic rewards, intrinsic rewards, and context-aware weights. Extrinsic rewards are obtained from the environment, whilst intrinsic rewards are developed based on the behaviors of both human and AI agents. Context-aware weights are dynamically adjusted according to the training context across three domains, such as task performance, AI agent behaviors, and human behaviors.}
    \vspace{15pt}
    \label{fig_BCR-DRL_architecture}
\end{figure}

    \textit{\textbf{ii) Ensuring effective exploitation of the explored state-action pairs.}} 
    Although the above sparse rewarding challenge in HAIC can be mitigated by intrinsic rewards, continuously relying on them throughout training can lead to unstable performance~\cite{NIPS2017_MAAC,Cooperative_IRL,ex_ex_self_supervised}, due to the inherent exploration-exploitation trade-off in DRL~\cite{Sutton_FlowerBook,classical_A3C,ex_ex_MARL}. To address this issue, we can leverage the training context in HAIC to optimize the exploration and exploitation. Specifically, the context in HAIC captures the evolving state of coordination between AI and human participants, such as accumulated sparse rewards, rewards instability conditions, and learning progress. This contextual information enables AI agents to adaptively adjust their learning strategy: prioritizing exploration in early stages when critical state-action pairs are scarce, while gradually transitioning to exploitation in later stages to refine the learned coordination patterns.

In this paper, we design an innovative behavior- and context-aware reward (BCR) for DRL, namely \textit{BCR-DRL}, to address the aforementioned challenges in HAIC. Our BCR (see Fig.~\ref{fig_BCR-DRL_architecture}) extends conventional extrinsic rewards with two key components: (i)~\textit{Dual intrinsic rewards} enhance exploration by encouraging both diverse AI actions and distinct behavior from the human partner, and (ii)~\textit{Context-aware weights} that optimize exploitation by dynamically adjusting reward weights based on the training context.
Specifically, the first component includes two intrinsic rewards: AI self-motivated rewards that promote action diversity, and human-aware rewards that help understand human intentions through counterfactual reasoning. Both rewards utilize logarithmic terms to increase the likelihood of encountering critical but rare state-action pairs. The second component dynamically balances intrinsic and extrinsic rewards based on coordination effectiveness, measured through task completion rates, human behavior patterns, and agent diversity.
Experimental results demonstrate that our BCR-DRL outperforms state-of-the-art algorithms, achieving higher sparse rewards and faster training convergence, which illustrate an innovative path for effective and seamless HAIC.

\section{Related Work} 
\textbf{\textit{Intrinsic rewards.}}
Prior work in multi-agent reinforcement learning has demonstrated the effectiveness of social influence as intrinsic rewards~\cite{ICML2019_Social_MARL_GitHub}. This approach uses counterfactual actions of the ego AI agent to encourage behaviors that significantly influence other AI agents' actions. However, HAIC presents fundamentally different challenges compared to standard MARL scenarios. Unlike MARL where all agents are trainable via RL algorithms, HAIC involves uncontrollable humans whose policies cannot be optimized through RL. This fundamental difference necessitates human behavior modeling approaches (such as behavior cloning) that are incompatible with standard MARL frameworks like centralized training with decentralized execution (CTDE)~\cite{CTDE}.
More recent work has proposed a model-free approach that maximizes AI behavior diversity without explicitly modeling human behavior~\cite{AAAI_overcooked_Tsinghua}. While this approach enables adaptation to general human behaviors, the no free lunch theorems~\cite{NoFreeLunch} suggest that optimal performance in specific scenarios requires exploiting knowledge about the particular behavioral patterns of individuals or groups.

We bridge this gap by proposing a dual intrinsic rewarding scheme that combines the benefits of both approaches while addressing their limitations. Our approach composes: (i) an AI self-motivated intrinsic reward that facilitates comprehensive exploration through behavioral diversity, and (ii) a human-motivated intrinsic reward that utilizes counterfactual human actions obtained from the pre-trained human model, enabling the AI agent to better adapt human intentions. This dual approach enables the AI agent to maintain a comprehensive exploration of its action space while developing actions that effectively complement specific human behavioral patterns, addressing the fundamental constraint that human policies cannot be directly optimized in HAIC settings.

\textbf{\textit{Critical rare state-action pairs.}}\label{section_related_works_rare_pair}
Capturing the critical state-action pairs associated with the sparse rewards is crucial for efficient DRL training in complicated scenarios like HAIC. However, these pairs are usually rarely encountered since they are the results of executing a series of temporally-extended actions~\cite{combine_human_aaai_2019,multiple_feedback_aaai_2023,balance_human_aaai_2020}. Without sufficient exposure of these critical rare state-action pairs, the AI agent struggles to identify effective policies, resulting in slower learning and local optima performance.

To address this challenge, we propose logarithmic-based intrinsic rewards specifically designed for HAIC to encourage targeted exploration. Our key innovation lies in how we leverage the mathematical properties of logarithmic functions: they naturally amplify the impact of low-probability events while compressing high-probability ones. This property is particularly valuable in our context, as it increases the relative importance of rare but critical state-action pairs during training (detailed analysis in Section~\ref{section_BCR_DRL}). This approach enables the agent to explore the state-action space more uniformly, increasing the chances of identifying and utilizing critical state-action pairs.

\textbf{\textit{Adaptive weights.}}
Although incorporating intrinsic rewards can encourage the exploration of the AI agent, it introduces the risk of over-exploration and low data sample utilization efficiency. This issue becomes particularly acute when critical state-action pairs, which were initially rare, but being captured more frequently as training progresses. At this point, continuing to emphasize exploration through intrinsic rewards may hinder rather than help the learning process. Therefore, a desirable DRL reward should be context-aware--capable of recognizing and adjusting when to emphasize the exploration by enlarging the intrinsic rewards and when to emphasize the exploitation by strengthening the extrinsic rewards.

The key challenge here lies in dynamically balancing these competing objectives based on the current training context. This bears a similarity to balancing multiple tasks in deep learning~\cite{Xin_HML}. For example, adaptive weighting has been developed to dynamically adjust task weights based on the instabilities caused by the co-existing tasks in deep learning~\cite{CVPR2019_MultiTask_attention}. Inspired by this concept of adaptive weighting, we propose a novel context-aware weighting mechanism for HAIC. Our approach dynamically adjusts the weights between different rewards by monitoring the training context, including the accumulated sparse rewards and instabilities caused by different rewards, aiming at efficiently achieving the global optimal of sparse rewards. 



\section{Preliminaries}\label{overcooked_env}

\subsection{Foundations for Logarithmic Intrinsic Rewards}\label{section_theory_log}
We provide the theoretical foundations to support the use of logarithmic forms in intrinsic reward design, which will be detailed in eqs.~\eqref{eq_reward_effect_from_AIself} and~\eqref{eq_reward_effect_from_human}. Intuitively, this design focuses on log-likelihood, increasing the sensitivity of the policy to rate state-action pairs.


\begin{theorem}[Entropy-like Logarithmic Intrinsic Rewards for Rare State-action Pair Oriented Policy Updates]
\label{thm:log_reward_unified}
Let $\pi_\theta(a|o)$ be a parameterized stochastic policy over action $a$ given observation $o$. For the counterfactual scenario, let $\tilde{o}$ be a counterfactual observation (e.g., conditioned on human-only action), and $\tilde{a}$ represents the counterfactual action taken with the same policy but under the counterfactual observation. The unified logarithmic intrinsic reward is defined as
\begin{equation}
\mathcal{R}(a, o, \tilde{o}|\pi_\theta, \delta)
\!\!=\!\!
\left| 
\log (\pi_\theta(a|o))
\!-\!
\log (\pi_\theta(\tilde{a}|\tilde{o}))^ \delta
\right|, \delta \in \{0, 1\}, \nonumber
\end{equation}
where $\delta$ determines the specific form of the reward: $\delta = 0$ reduces to $|\log \pi_\theta(a|o)|$, and $\delta = 1$ yields a symmetric difference between log-likelihoods.
As a result, when optimizing the expected intrinsic reward $\mathbb{E}_{\pi_\theta}[\mathcal{R}(a, o, \tilde{o}|\pi_\theta, \delta)]$, the resulting policy gradients naturally amplify updates toward rare state-action pairs, enhancing adaptability in environments with sparse external rewards.
\end{theorem}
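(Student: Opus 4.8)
The plan is to show that the gradient of $\mathbb{E}_{\pi_\theta}[\mathcal{R}(a,o,\tilde o\mid\pi_\theta,\delta)]$ contains a term proportional to $\nabla_\theta\log\pi_\theta(a\mid o)$ weighted by a factor that grows as $\pi_\theta(a\mid o)\to 0$, so that the update magnitude is largest precisely on rare state-action pairs. First I would treat the two cases separately. For $\delta=0$ the reward is $\mathcal{R}=|\log\pi_\theta(a\mid o)| = -\log\pi_\theta(a\mid o)$ (since probabilities are at most one, the log is nonpositive and the absolute value just flips the sign), so $\mathbb{E}_{\pi_\theta}[\mathcal{R}]$ is exactly the policy entropy $\mathcal{H}(\pi_\theta(\cdot\mid o))$ averaged over the observation distribution. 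I would then apply the log-derivative (REINFORCE) identity: for $f(\theta)=\mathbb{E}_{a\sim\pi_\theta}[g_\theta(a)]$ one has $\nabla_\theta f = \mathbb{E}_{\pi_\theta}[g_\theta(a)\nabla_\theta\log\pi_\theta(a\mid o)] + \mathbb{E}_{\pi_\theta}[\nabla_\theta g_\theta(a)]$. Substituting $g_\theta(a) = -\log\pi_\theta(a\mid o)$ gives $\nabla_\theta\mathbb{E}_{\pi_\theta}[\mathcal{R}] = \mathbb{E}_{\pi_\theta}\big[(-\log\pi_\theta(a\mid o) - 1)\,\nabla_\theta\log\pi_\theta(a\mid o)\big]$, using $\mathbb{E}_{\pi_\theta}[\nabla_\theta\log\pi_\theta]=0$. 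The scalar coefficient $-\log\pi_\theta(a\mid o)-1$ diverges to $+\infty$ as $\pi_\theta(a\mid o)\to 0^{+}$, which is the amplification claim: low-probability actions receive an unboundedly large gradient weight, pushing probability mass toward them, whereas high-probability actions ($\pi_\theta\to 1$) get a coefficient near $-1$, i.e. mild suppression.

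For $\delta=1$ I would write $\mathcal{R}=\big|\log\pi_\theta(a\mid o)-\log\pi_\theta(\tilde a\mid\tilde o)\big|$ and, away from the measure-zero set where the two log-likelihoods coincide, drop the absolute value by introducing the sign $s=\mathrm{sgn}\big(\log\pi_\theta(a\mid o)-\log\pi_\theta(\tilde a\mid\tilde o)\big)$, treated as locally constant for differentiation. Applying the same log-derivative identity — now over the joint sampling of $a$ and the counterfactual $\tilde a$ — the score-function term contributes a coefficient $s\,\big(\log\pi_\theta(a\mid o)-\log\pi_\theta(\tilde a\mid\tilde o)\big) = \mathcal{R}$ itself on $\nabla_\theta\log\pi_\theta(a\mid o)$ (plus an analogous counterfactual term), and the pathwise term contributes $s\big(\nabla_\theta\log\pi_\theta(a\mid o)-\nabla_\theta\log\pi_\theta(\tilde a\mid\tilde o)\big)$. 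The key observation is again monotonicity: the magnitude of the effective coefficient is controlled by $|\log\pi_\theta(a\mid o)|$ and $|\log\pi_\theta(\tilde a\mid\tilde o)|$, each of which blows up as the corresponding probability vanishes, so the update is dominated by whichever of the actual or counterfactual action is rarer. I would close by noting that this recovers the $\delta=0$ behavior as a special case when $\tilde o=o$ and the counterfactual branch is inactive, giving the "unified" statement.

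The main obstacle I anticipate is the non-differentiability of $|\cdot|$ and the handling of the counterfactual term. Strictly, $\tilde a$ is sampled from $\pi_\theta$ under $\tilde o$, so it carries its own score-function contribution, and one must be careful whether $\tilde o$ itself depends on $\theta$ (through the human model it should not, since that model is pre-trained and frozen — I would state this as an assumption). I would handle the absolute value by restricting to the open set where the argument is nonzero (the boundary has measure zero under a non-atomic-in-the-right-sense policy, or one may use a subgradient), and I would be explicit that the claim is about the \emph{magnitude} of per-sample gradient contributions rather than a clean closed form, since the expectation mixes the diverging coefficient against the vanishing density $\pi_\theta(a\mid o)$. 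The honest version of the statement is therefore qualitative: I would show the per-sample weight on $\nabla_\theta\log\pi_\theta(a\mid o)$ is a strictly decreasing function of $\pi_\theta(a\mid o)$ that is unbounded as $\pi_\theta(a\mid o)\to0$, and remark that this is exactly the "amplify rare, compress frequent" property claimed, without over-promising an exact integral identity.
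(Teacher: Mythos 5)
Your proposal is correct in substance, but it takes a genuinely different route from the paper. The paper's proof works only with the $\delta=0$ case and never differentiates the expectation: it fixes two actions with probabilities $p_1>p_2=\epsilon p_1$, computes the ratio of per-action rewards $\mathcal{R}(a_2)/\mathcal{R}(a_1)=1+\log(1/\epsilon)/\log(1/p_1)$ to show unbounded growth as $\epsilon\to 0$, contrasts this with the entropy-weighted reward $-p\log p$ whose ratio vanishes, and then simply asserts that the per-sample gradient contribution is $\mathcal{R}(a,o|\pi_\theta)\nabla_\theta\log\pi_\theta(a|o)$, i.e. it treats the reward as a frozen REINFORCE coefficient of order $O(-\log p)$ versus $O(-p\log p)$. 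You instead differentiate the full objective $\mathbb{E}_{\pi_\theta}[-\log\pi_\theta(a|o)]$, correctly picking up the pathwise term and obtaining the coefficient $-\log\pi_\theta(a|o)-1$, and you observe that this objective is exactly the policy entropy --- a point the paper does not make, and one that sharpens its claim: the distinction from ``entropy-based'' rewards lies in the per-sample weighting ($-\log p$ versus $-p\log p$), not in the expected objective. You also treat the $\delta=1$ counterfactual case, which the paper's proof omits entirely, and you flag the non-differentiability of the absolute value and the $\theta$-independence of $\tilde{o}$ as explicit assumptions. What the paper's ratio argument buys is an elementary, example-ready contrast with entropy-weighted rewards (it feeds directly into the worked Example 1); what your derivation buys is fidelity to the theorem's actual claim about optimizing the expectation, coverage of both values of $\delta$, and an honest accounting of where the amplification lives (in the per-sample weight, which is mixed against the vanishing density $\pi_\theta(a|o)$ in expectation). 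Both arguments establish the same qualitative conclusion; yours is the more complete of the two.
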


\begin{proof}
We prove that the logarithmic intrinsic reward amplifies policy updates toward rare state-action pairs by analyzing the reward scaling properties.

For the case $\delta = 0$, the reward reduces to $\mathcal{R}(a, o|\pi_\theta) = |\log \pi_\theta(a|o)|$. Since $0 < \pi_\theta(a|o) \leq 1$, we have $\log \pi_\theta(a|o) \leq 0$, thus $|\log \pi_\theta(a|o)| = -\log \pi_\theta(a|o)$.

Consider two actions with probabilities $p_1 > p_2 > 0$. The ratio of their logarithmic intrinsic rewards is:
\begin{equation}
\frac{\mathcal{R}(a_2)}{\mathcal{R}(a_1)} = \frac{-\log p_2}{-\log p_1} = \frac{\log(1/p_2)}{\log(1/p_1)}
\end{equation}

Since $p_1 > p_2$, we have $1/p_2 > 1/p_1$, therefore $\frac{\mathcal{R}(a_2)}{\mathcal{R}(a_1)} > 1$, showing that rarer actions receive higher intrinsic rewards.

More importantly, if $p_2 = \epsilon \cdot p_1$ where $0 < \epsilon < 1$, then:
\begin{equation}
\frac{\mathcal{R}(a_2)}{\mathcal{R}(a_1)} = \frac{\log(1/(\epsilon p_1))}{\log(1/p_1)} = 1 + \frac{\log(1/\epsilon)}{\log(1/p_1)}
\end{equation}

As $\epsilon \to 0$ (action becomes rarer), this ratio grows unboundedly, demonstrating exponential amplification for rare actions.

In contrast, traditional entropy-based rewards $\mathcal{H}(a|o) = -\pi_\theta(a|o)\log \pi_\theta(a|o)$ yield:
\begin{equation}
\frac{\mathcal{H}(a_2)}{\mathcal{H}(a_1)} = \frac{p_2 \log(1/p_2)}{p_1 \log(1/p_1)} = \epsilon \cdot \frac{\log(1/(\epsilon p_1))}{\log(1/p_1)}
\end{equation}

As $\epsilon \to 0$, this ratio approaches 0, showing diminishing rewards for rare actions.

The policy gradient contribution is proportional to $\mathcal{R}(a, o|\pi_\theta) \nabla_\theta \log \pi_\theta(a|o)$. For logarithmic rewards, rare actions (small $p$) receive gradient amplification of $O(-\log p)$, while entropy-based methods provide only $O(-p \log p)$, which vanishes as $p \to 0$.

Therefore, the logarithmic form naturally amplifies updates toward rare state-action pairs, enhancing exploration in sparse reward environments. This completes the proof.
\end{proof}

\begin{example} \label{example}
Consider a stochastic policy $\pi_\theta(a|o)$ over $3$ actions in a given observation state $o_1$, with probabilities $\pi_\theta(a_1|o_1)=0.7$, $\pi_\theta(a_2|o_1)=0.2$, and $\pi_\theta(a_3|o_1)=0.1$.
We compare the traditional entropy-based intrinsic reward $\mathcal{H}(a_i|o_i)=-\pi_\theta(a_i|o_i)\!\log(\pi_\theta(a_i|o_i)), \forall i\in\{1,2,3\}$ with our proposed logarithmic intrinsic reward $ \mathcal{R}(a_i,o_i|\pi_\theta) = \left| \log(\pi_\theta(a_i|o_i)) \right|, \forall i\in\{1,2,3\}$ (corresponding to $\delta = 0$ in Theorem~\ref{thm:log_reward_unified}):

\noindent For the traditional entropy-based form, we have\\
$\mathcal{H}(a_1|o_1)= -\pi_\theta(a_1|o_1)\log(\pi_\theta(a_1|o_1)) = 0.250$, \\
$\mathcal{H}(a_2|o_1)= -\pi_\theta(a_2|o_1)\log(\pi_\theta(a_2|o_1)) = 0.322$, \\
$\mathcal{H}(a_1|o_1)= -\pi_\theta(a_3|o_1)\log(\pi_\theta(a_3|o_1)) = 0.230$.

\noindent For our logarithmic form, we have\\
$\mathcal{R}(a_1|o_1)= \left| \log(\pi_\theta(a_1|o_1))\right| = \left|\log(0.7) \right| = 0.357$,  \\
$\mathcal{R}(a_2|o_1)= \left|\log(\pi_\theta(a_2|o_1)) \right| = \left|\log(0.2) \right| = 1.609$,  \\
$\mathcal{R}(a_3|o_1)= \left|\log(\pi_\theta(a_3|o_1))\right|  = \left|\log(0.1) \right| = 2.303$.    
\end{example}
\begin{remark}
In Example~\ref{example}, the traditional entropy-based form, $\mathcal{H}(\cdot)$, provides relatively balanced rewards across all actions, with the highest reward value of $0.322$ for action $a_2$. In contrast, our logarithmic intrinsic reward form, $\mathcal{R}(\cdot)$, particularly amplifies the reward for rare actions, assigning $2.303$ to the least likely action $a_3$, a value approximately $6.5$ times higher than the reward for the most common action $a_1$. Compared with the traditional entropy-based method, this property creates a stronger incentive for policy updates toward exploring rare state-action pairs, which is particularly valuable when critical behaviors might be associated with low-probability actions. By removing the probability weighting term, the logarithmic form establishes a reward landscape that more effectively promotes exploration of the entire state-action space, thereby facilitating the HAIC scenarios that tend to experience sparse rewards.

We also note that the logarithmic form offers computational simplicity by eliminating the probability multiplier term, reducing computational complexity compared to entropy-based methods while maintaining effective exploration capabilities.
\end{remark}

\subsection{Benchmark Human-AI Coordination}\label{section_benchmark_HAIC}
\textbf{\textit{Experimental coordination aspects.}}
We first evaluate our approach using Overcooked~\cite{NeurIPS2019_Overcooked_GitHub,code_github_Overcooked}, a standardized benchmark for HAIC that simulates collaborative cooking tasks requiring coordinated actions between human and AI agents. This environment is particularly suitable for studying HAIC, emphasizing sparse rewards, as successful task completion requires executing precise sequences of coordinated actions of both human and AI agents.
We conduct experiments across three distinct layouts of the overcooked environment. Each of the three layouts highlights different aspects of coordination between human and AI agents. The \textit{Cramped Room} layout focuses on coordination between agents located in a shared space, where collision avoidance between human and AI agents is critical; The \textit{Asymmetric Advantages} layout concentrates on coordination between agents located in distinct areas with varying access to cooking resources, where asymmetric behavior planning is critical; The \textit{Coordination Ring} layout evaluates coordination between agents located in a small room with a central obstacle, where path planning is critical. 

To further investigate the generalization ability of our approach beyond Overcooked, we introduce an \textit{Exploration} environment, where a human and an AI agent must coordinate to explore a shared space by jointly covering all accessible areas within a limited number of timesteps. This setting focuses on coordination in terms of asynchronous and complementary actions, where agents must avoid redundant movements, adapt to each other's partial progress, and efficiently divide the exploration workload. 
Detailed analysis along with the experiments is given in Section~\ref{section_expreiments}.

\textbf{\textit{Human behavior alignment.}}
Current DRL-based AI agents for HAIC fall into two categories: model-free approaches that train AI agents to adapt to general human behaviors without relying on specific human model~\cite{AAAI_overcooked_Tsinghua}, and model-based approaches that enable more personalized adaptation to individual or group-specific behavioral patterns~\cite{NeurIPS2019_Overcooked_GitHub,IJCAI2024_HRL_Overcooked,Harvard_OfflineRL}. 
Our work adopts the model-based approach. The human models utilized in the Overcooked layouts are developed by the authors of~\cite{NeurIPS2019_Overcooked_GitHub}. These real human-cloned models use data collected from a group of humans to imitate the behaviors of human players across three different layouts in the Overcooked environment. The reliability of the selected human models has been validated by the original Overcooked creators~\cite{NeurIPS2019_Overcooked_GitHub}. Subsequent studies~\cite{AAAI_overcooked_Tsinghua,IJCAI2024_HRL_Overcooked} further confirm their ability to reproduce human-level coordination patterns in all three layouts. 
For the generalization study, different from the human-cloned models used in Overcooked, the human model is a deliberately synthetic agent designed to capture more generic and stochastic behaviors, focusing on coordination under temporal asynchrony and behavioral uncertainty. Details of this setup are provided in Section~\ref{experiments_Exploration_env}.

\begin{remark}
    The selection of the model-based approach ensures reliable validation of our BCR-DRL algorithm, particularly its human-motivated intrinsic reward component, which builds upon accurate modeling of human behavior. The three layouts are selected based on their demonstrated reliability in previous work where trained human models achieved performance comparable to average human players~\cite{NeurIPS2019_Overcooked_GitHub}.
    While our BCR-DRL algorithm's performance depends on human model accuracy, it remains robust as long as behavioral patterns are consistent between training and testing phases, regardless of the specific way the human model behaves. This robustness was demonstrated in Section 5.3, where we achieved performance improvements even with a simplified random behavior model that differs significantly from real human behavior patterns.
\end{remark}

\section{The Proposed BCR-DRL Algorithm}\label{section_BCR_DRL}
In this section, we first present the design specifics of our BCR, followed by the training algorithm of BCR-DRL for HAIC.

\subsection{Design Specifics of BCR}\label{section_design_BCR}
Our BCR is defined as 
\begin{equation}
    r_{t} 
    = \kappa_{n}^\mathcal{E} r_t^\mathrm{E} 
    + \kappa_{n}^{\mathrm{A}} r_t^{\mathrm{A}}
    + \kappa_{n}^{\mathrm{H}} r_t^{\mathrm{H}}
    ,
    \label{eq_reward_bcr_drl}
\end{equation}
where $r_t^\mathrm{E}$ is the standard extrinsic reward obtained from the environment in the $t$-th timestep, while $r_t^{\mathrm{A}}$ and $r_t^{\mathrm{H}}$ represent a pair of intrinsic rewards in the $t$-th timestep. The context-aware weights $\kappa_{n}^\mathcal{E}$, $\kappa_{n}^{\mathrm{A}}$, and $\kappa_{n}^{\mathrm{H}}$ modulate the contribution of each reward component, where $n = \lfloor t / T \rfloor$ indicates the training epoch index and $T$ represents the number of timesteps per epoch.

The dual intrinsic rewards encourage the exploration of critical rare state-action pairs that are associated with sparse rewards, capturing distinctive behavioral patterns from both the AI agent and its human coordinator. The context-aware weights adaptively adjust each reward component's significance epoch by epoch, maintaining synchronization with the BCR-DRL policy updates. We note that the use of distinguishable superscripts ($\mathrm{E}$ and $\mathcal{E}$) for the extrinsic reward and its context-aware weight, is intentional to emphasize their distinct design rationale, which will be elaborated in eq.~\eqref{eq_reward_ex_no_discount}.


\setcounter{footnote}{0}
\subsubsection{Extrinsic Reward}\label{section_ex_reward}
The extrinsic reward, $r_t^\mathrm{E}$, comprises two components: the target sparse reward, $r_t^{\mathrm{E}_{\mathrm{S}}}$, and a stage reward\footnote{Stage rewards are defined as the intermediate rewards used to guide the exploration and exploitation of key preliminary actions that lead to the target sparse rewards~\cite{reward_shaping_nips2020,reward_shaping_WED}. We follow the methods of stage rewards designing given in the supplementary of~\cite{NeurIPS2019_Overcooked_GitHub}, thereby saving the space in this paper for discussing its effectiveness.}, $r_t^{\mathrm{E}_{\mathrm{G}}}$. This extrinsic reward is given by
\begin{equation}
r_t^\mathrm{E}\!\!=\!r_t^{\mathrm{E}_{\mathrm{S}}}+r_t^{\mathrm{E}_{\mathrm{G}}}\!\cdot\!f_{\phi}(t)
    \!\!=\!\! 
    \lambda^{\mathrm{E}_{\mathrm{S}}}\!\cdot\!\mathds{1}\left\{f_C(s_t,a_t)\!\!=\!{True}\right\}
    +
    r_t^{\mathrm{E}_{\mathrm{G}}}\!\cdot\!f_{\phi}(t)
    ,
\label{eq_reward_ex}
\end{equation}
where $\lambda^{\mathrm{E}_{\mathrm{S}}}$ is a constant coefficient representing the magnitude of successfully executing critical rare state-action pair associated with sparse reward. The indicator function $\mathds{1}(\cdot)$ evaluates to $1$ when condition $f_C(s_t, a_t) = {True}$ satisfied, indicating the successful execution of action $a_t$ in state~$s_t$. The stage reward term is modulated by a time-dependent fading function $f_{\phi}(t)$. These extrinsic rewards obtained from the environment are shared by human and AI agents.



%
\subsubsection{Intrinsic Reward Design}
Our intrinsic reward design combines two components: an AI self-motivated intrinsic reward and a human-motivated intrinsic reward (see Fig.~\ref{fig_BCR-DRL_architecture}), encouraging comprehensive exploration of critical rate state-action pairs from AI and human behavior patterns.

\textbf{\textit{AI self-motivated intrinsic reward.}}
To encourage the AI agent to adopt a diverse policy that pays more attention to rare state-action pairs, we design the AI self-motivated intrinsic reward following Theorem~\ref{thm:log_reward_unified} as
\begin{equation}
\begin{split}
    r_t^{\mathrm{A}}
        & = \lambda^{\mathrm{A}}
        \cdot
        \mathbb{E}_{\pi}\left[ 
        \left| \log\left(
        \pi(a_t^{\mathrm{A}} \mid o_t^\mathrm{A})        
        \right) \right|
        \right]
            ,
    \label{eq_reward_effect_from_AIself}
\end{split}
\end{equation}
where $\lambda^{\mathrm{A}}$ is a constant coefficient that determines the significance of the self-motivated intrinsic reward, $\mathbb{E}[\cdot]$ denotes the expectation, $\pi(\cdot)$ is the AI agent's policy, and $a_t^{\mathrm{A}}$ and $o_t^{\mathrm{A}}$ represent the AI agent's action and state at the $t$-th timestep, respectively.

\textbf{\textit{Human-motivated intrinsic reward.}}\label{section_reward_intrinsic_human}
We design the human-motivated intrinsic reward based on Theorem~\ref{thm:log_reward_unified}, which is given by
\begin{equation}
    r_t^{\mathrm{H}}
        = 
        \lambda^{\mathrm{H}}
        \cdot
        \mathbb{E}_{\pi}
        \left[
        \left|
        \log
            \left( 
            \frac
            {\pi \left(a_t^{\mathrm{A}} \mid o_t^{\mathrm{A}} \right)}  
            {{\pi} \left(\tilde{a}_t^{\mathrm{A}} \mid \tilde{o}_t^{\mathrm{A}}\left(a_t^{\mathrm{H}}, o_t^{\mathrm{A}}\right)\right)}
            \right) 
        \right|
        \right]
            ,
    \label{eq_reward_effect_from_human}
\end{equation}
where $\lambda^{\mathrm{H}}$ is a constant coefficient, $\tilde{o}_t^{\mathrm{A}}\left(a_t^{\mathrm{H}}, o_t^{\mathrm{A}}\right)$ represents the AI agent's counterfactual observation when only the human takes action $a_t^\mathrm{H}$ at timestep $t$, with the AI agent remaining inactive. The term ${\pi} \left(\tilde{a}_t^{\mathrm{A}} \big| \tilde{o}_t^{\mathrm{A}}\left(a_t^{\mathrm{H}}, o_t^{\mathrm{A}}\right)\right)$ denotes the AI agent's policy in this counterfactual scenario.


Two intrinsic rewards given in eqs.~\eqref{eq_reward_effect_from_AIself} and \eqref{eq_reward_effect_from_human} encourage the AI agent to explore actions that can increase the likelihood of encountering the critical rare state-action pairs associated with sparse rewards. This strategic exploration boosts the frequency of targeted behaviors, helping the AI agent adapt to human behavior effectively in HAIC.

\subsubsection{Context-aware Weights Design}\label{section_context_aware_weights}
As the HAIC training progresses, these critical state-action pairs associated with sparse rewards are encountered more frequently, necessitating a gradual shift from exploration to exploitation compared to early training stages. To facilitate this transition, we design context-aware weights that adaptively balance exploration and exploitation by considering the training context, specifically the accumulated sparse reward values and reward instabilities.

To mitigate potential over-exploration brought about by the intrinsic rewards, intuitively, we limit their influence to the first $N_\mathrm{th}$ training epochs. This truncation is described as
\begin{equation}\label{eq_weights_final}
\begin{split}
    {\kappa}_{n}^\mathcal{E} &= \hat{\kappa}_{n}^\mathcal{E}  \cdot  \mathds{1} \{n < N_\mathrm{th}\} + \mathds{1}\{n \geq N_\mathrm{th}\},
    \\
    {\kappa}_{n}^{\mathrm{A}} &= \hat{\kappa}_{n}^{\mathrm{A}} \cdot \mathds{1} \{n < N_\mathrm{th}\},
    \\
    {\kappa}_{n}^{\mathrm{H}} &= \hat{\kappa}_{n}^{\mathrm{H}} \cdot \mathds{1} \{n < N_\mathrm{th}\},
\end{split}
\end{equation}
where $\hat{\kappa}_{n}^\mathcal{E}$, $\hat{\kappa}_{n}^\mathrm{A}$, and $\hat{\kappa}_{n}^\mathrm{H}$ represent the influencing degrees regarding the HAIC across three domains--task performance, AI agent behavior, and human behavior, respectively. These scores reflect the degrees of influence from sparse rewards, AI self-motivated intrinsic rewards, and human-motivated intrinsic rewards, respectively, during the training phase when all three components are active.

\begin{algorithm}[t]
\caption{The Proposed BCR-DRL Algorithm}\label{alg_BCR_DRL}
\begingroup
\begin{algorithmic}[1]
\Require Episode number $E$ per epoch, maximum steps per episode $K$, number of epochs for NN updating $N$, discount factor $\gamma$, smoothing factor $\alpha$ of generalized advantage estimator, clip factor $\omega$. $T=EK$.
\Ensure Well-trained actor network $\pi(\cdot|\cdot;\theta^*)$.
\State Initialize actor network $\pi(\cdot|\cdot)$ and critic network $V(\cdot)$ with random parameter $\theta$ and $\varphi$, respectively.
\For{$n = 0$ to $N$} 
    \For{episode = $1$ to $E$} 
        \State Randomly initialize the HAIC scenario.
        \For{$t = nT+1$ to $nT+K$} 
            \State Collect $<o_t^{\mathrm{A}},\tilde{o}_t^{\mathrm{A}}\left(a_t^{\mathrm{H}}, o_t^{\mathrm{A}}\right), a_t^{\mathrm{A}},r_t>$ 
            \State Calculate the rewards in eqs.~\eqref{eq_reward_ex}, \eqref{eq_reward_effect_from_AIself}, and \eqref{eq_reward_effect_from_human}.\label{alg:experiences}
        \EndFor
    \EndFor
    \State Update context-aware weights in eq.~\eqref{eq_weights_final}.
        \State Compute the advantage function $A_t$.
        \State Compute the reward-to-go based on reward in eq.~\eqref{eq_reward_bcr_drl}.
    \State Update parameters of actor and critic NNs, $\varphi$ and $\theta$.
\EndFor
\end{algorithmic}
\endgroup
\end{algorithm}

To measure these three influencing degrees in eq.~\eqref{eq_weights_final}, we propose to assign greater values to rewards exhibiting higher instability. For example, extrinsic reward is assigned a larger influencing degree if they show greater instabilities compared to intrinsic rewards, and vice versa. To ensure policy robustness, these influencing degrees are updated epoch-by-epoch in synchronization with the BCR-DRL updates, and are calculated by
\begin{equation}\label{eq_weights}
    \hat{\kappa}_{n}^\mathcal{E}, 
    \hat{\kappa}_{n}^{\mathrm{A}}, 
    \hat{\kappa}_{n}^{\mathrm{H}}
    = 
    \lambda^\mathrm{R} 
    \cdot
    \texttt{softmax} 
    \left(
    \frac
    {\Bar{R}_{n-1}^\mathcal{E}}
    {\Bar{R}_n^\mathcal{E}}
    , 
    \frac
    {\Bar{R}_{n-1}^{\mathrm{A}}}
    {\Bar{R}_n^{\mathrm{A}}}
    , 
    \frac
    {\Bar{R}_{n-1}^{\mathrm{H}}}
    {\Bar{R}_n^{\mathrm{H}}}
    \right),
\end{equation}
where $\lambda^\mathrm{R}$ is a constant coefficient. The terms $\Bar{R}_n^\mathcal{E}$, $\Bar{R}_n^\mathrm{A}$ and $\Bar{R}_n^\mathrm{H}$ are: the average summation of sparse and stage rewards; the average AI self-motivated intrinsic rewards; and the average human-motivated intrinsic rewards, respectively. They are calculated by
\begin{equation}
\begin{split}
    \Bar{R}_n^\mathcal{E}       
    = \frac{1}{T}
    \hspace{-6pt}
    \sum_{t=nT}^{(n+1)T} 
    \hspace{-7pt}
    r_t^\mathcal{E}
    ,
    ~~~
    \Bar{R}_n^{\mathrm{A}} 
    = 
    \frac{1}{T}
    \hspace{-6pt}
    \sum_{t=nT}^{(n+1)T}
    \hspace{-7pt}
    r_t^{\mathrm{A}}
    ,
    ~~~
    \Bar{R}_n^{\mathrm{H}} 
    &= 
    \frac{1}{T}
    \hspace{-6pt}
    \sum_{t=nT}^{(n+1)T}
    \hspace{-7pt}
    r_t^{\mathrm{H}}
    ,
\end{split}
\end{equation}
where
\begin{equation}
    r_t^\mathcal{E} = r_t^{\mathrm{E}_\mathrm{S}} + r_t^{\mathrm{E}_\mathrm{G}},
\label{eq_reward_ex_no_discount}
\end{equation}
is the summation of sparse and stage rewards, distinct from the extrinsic reward defined in eq.~\eqref{eq_reward_ex}. The design intuition is environmental dynamics, represented by ${\Bar{R}_{n-1}^\mathcal{E}}/ {\Bar{R}_n^\mathcal{E}}$ in eq.~\eqref{eq_weights}, cannot be adequately represented by only sparse reward or extrinsic reward solely. 


\subsection{Training Algorithm}
Our BCR-DRL agent is adapted from proximal policy optimization (PPO)~\cite{ppo_classic}. Its architecture consists of actor and critic neural networks with parameters $\theta$ and $\varphi$, respectively. The actor generates actions $a_t^{\mathrm{A}}$ using a stochastic policy $\pi(a_t^{\mathrm{A}}|o_t^{\mathrm{A}};\theta)$, which represents the probability density of $a_t^{\mathrm{A}}$ given the current state $o_t^{\mathrm{A}}$. The critic estimates the state-value function $V(o_t^{\mathrm{A}};\varphi)$ based on the actor's policy. 
Please refer to Algorithm~\ref{alg_BCR_DRL} for the step-by-step training process of BCR-DRL for HAIC.

\begin{figure*}[!t]
    \centering
    \begin{subfigure}[t]{5.9cm}
        \includegraphics[width=5.9cm]{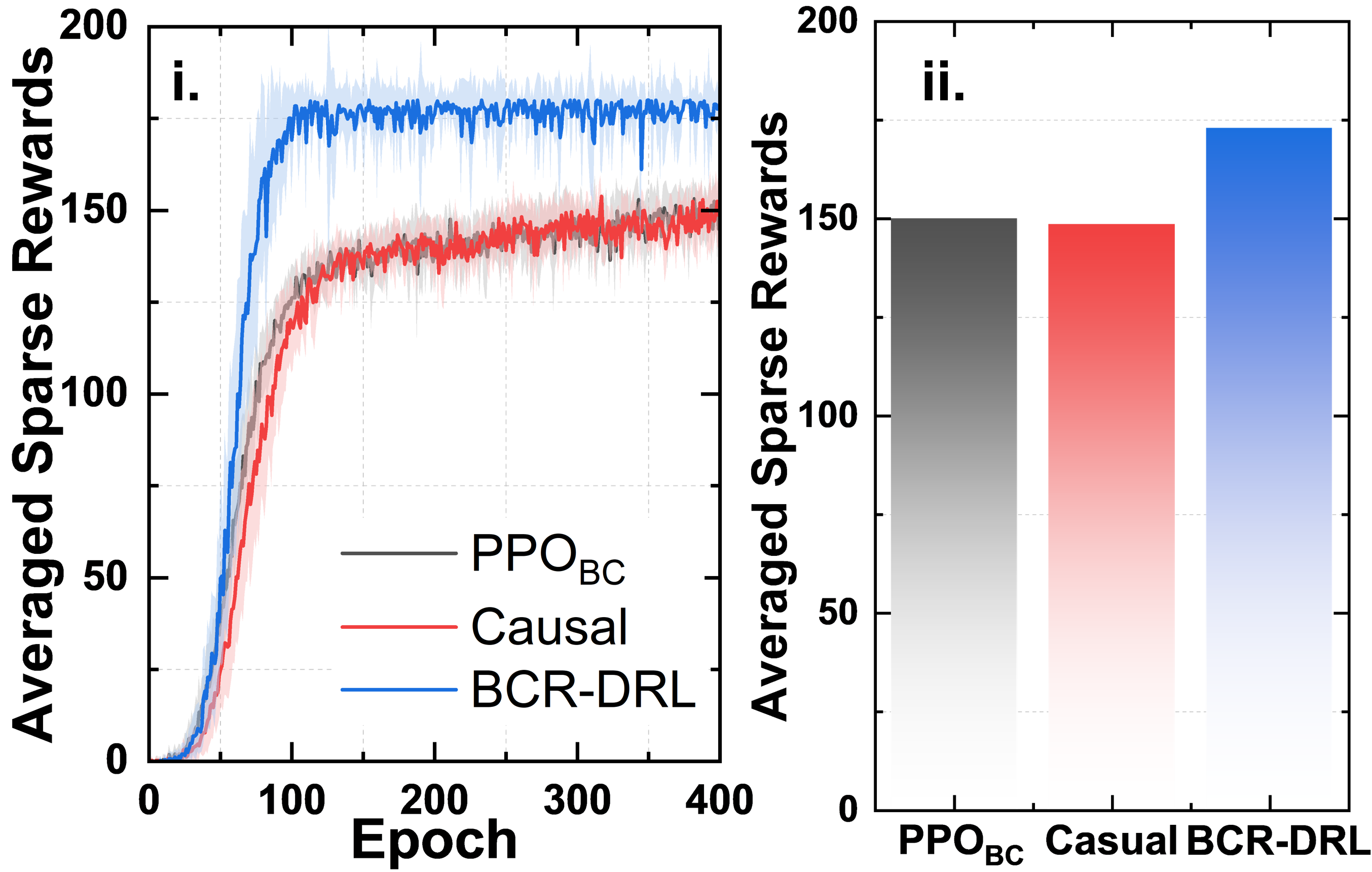}
        \subcaption{\textit{Cramped Room}.}
        \label{fig_cr_sim_results1}
    \end{subfigure}
    \begin{subfigure}[t]{5.9cm}
        \includegraphics[width=5.9cm]{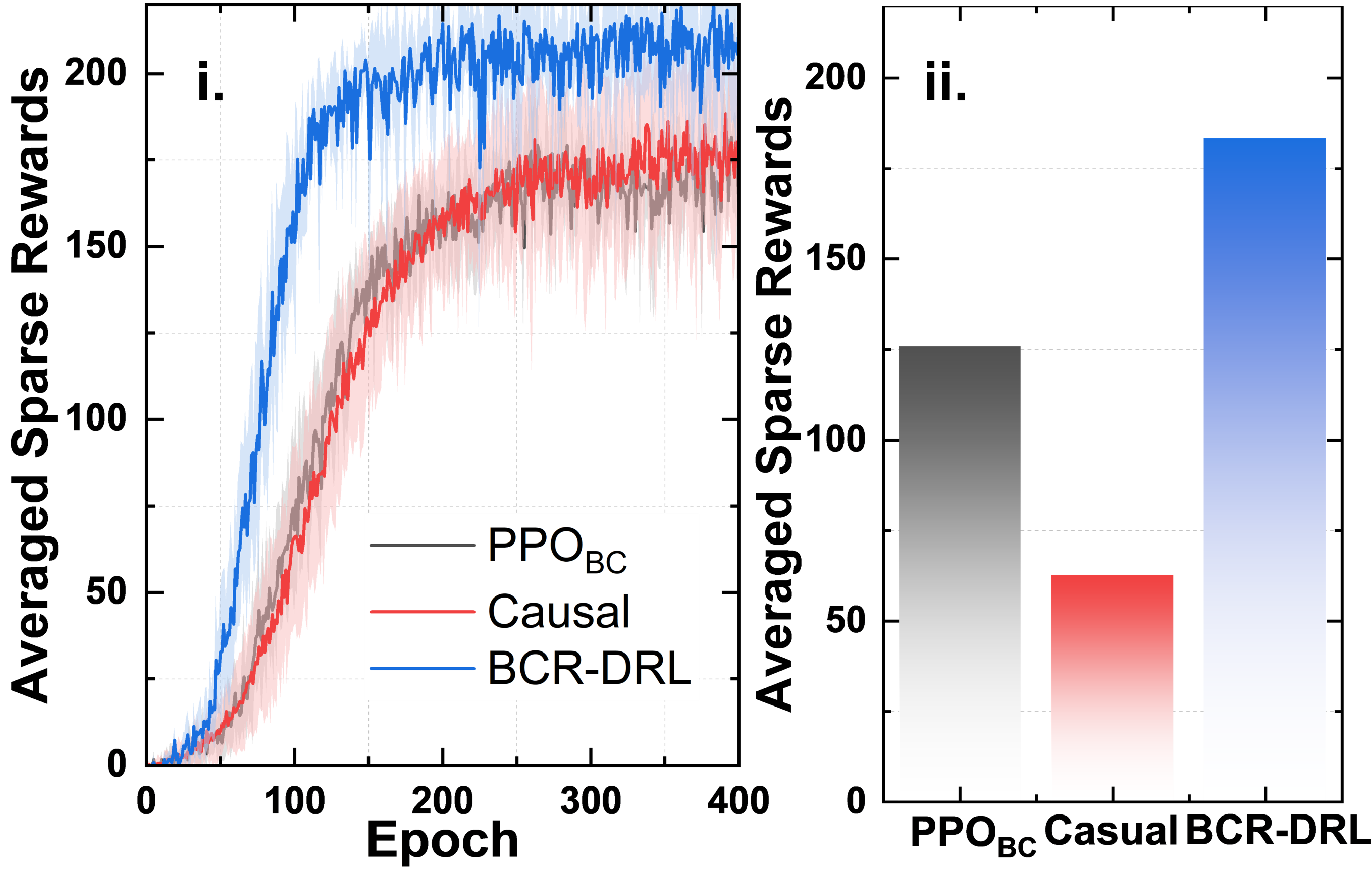}
        \subcaption{\textit{Asymmetric Advantages}.}
        \label{fig_aa_sim_results1}
    \end{subfigure}
    \begin{subfigure}[t]{5.9cm}
        \includegraphics[width=5.9cm]{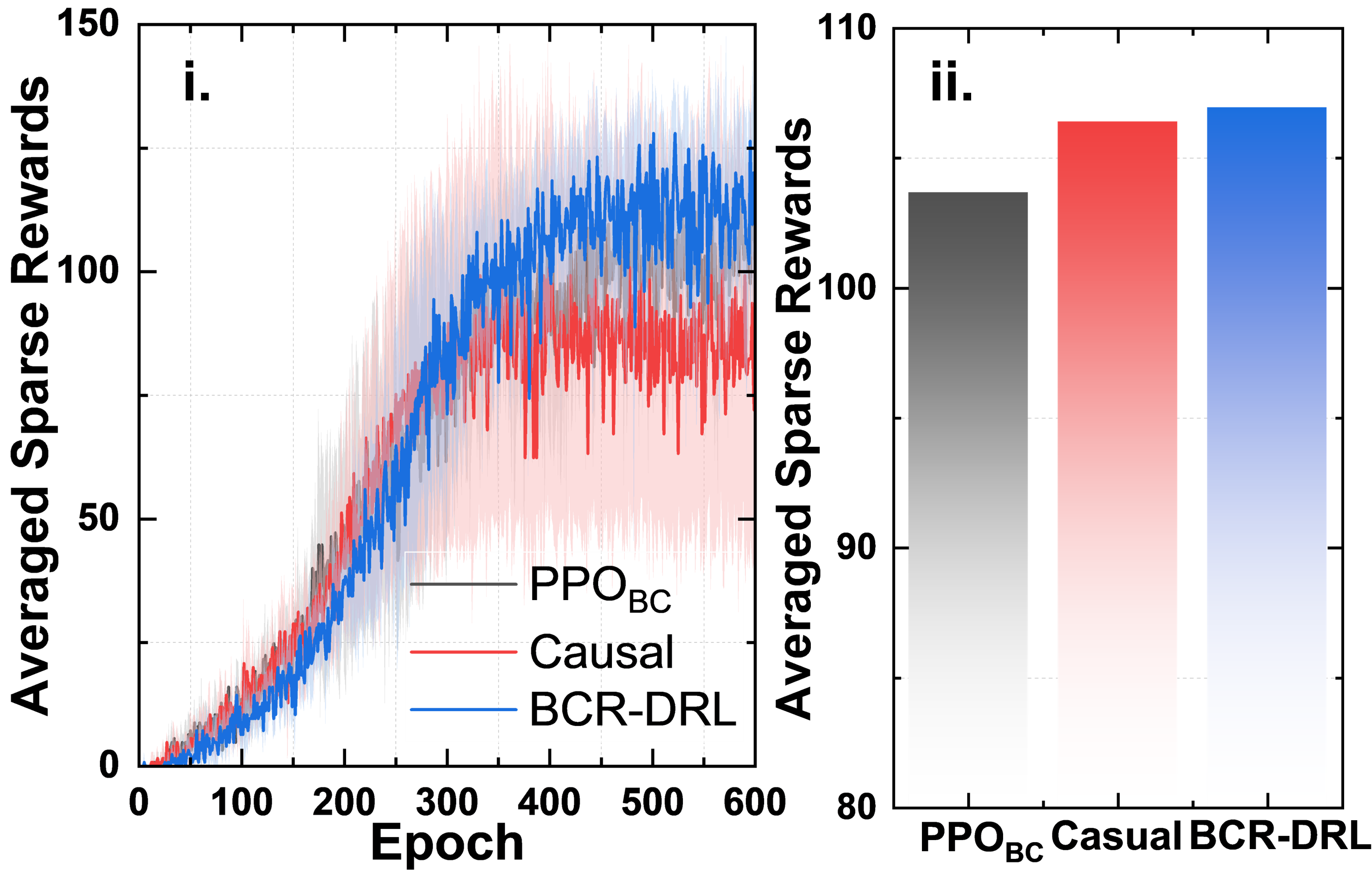}
        \subcaption{\textit{Coordination Ring}.}
        \label{fig_co_sim_results1}
    \end{subfigure}
    \vspace{10pt}
    \caption{Average sparse rewards in different layouts of the Overcooked environment, obtained by training in~(i) and testing in~(ii).}
    \label{fig_all_sparse_reward}
    \vspace{10pt}
\end{figure*}


\section{Experiments}\label{section_expreiments}
Following the original Overcooked environment setup~\cite{NeurIPS2019_Overcooked_GitHub}, our experimental framework consists of one human model and one RL-based AI agent, making our approach effectively a single-agent reinforcement learning problem where we train a single-agent PPO to coordinate with the human model. We note that while the training process uses a combination of multiple reward components (extrinsic and intrinsic rewards), we report only the sparse environmental rewards during both training and evaluation phases, as they serve as the primary metric for task completion performance. The detailed training curves for individual reward components are provided in Fig. 4.

We evaluate \textbf{BCR-DRL} algorithm against two benchmark algorithms: 1) \textbf{PPO$_\mathbf{BC}$ benchmark:} This benchmark uses only the extrinsic reward, without incorporating intrinsic rewards or context-aware weights adjustments~\cite{NeurIPS2019_Overcooked_GitHub}. 2) \textbf{Causal benchmark:} This benchmark includes causal influence rewards in addition to the extrinsic rewards used in PPO$_\mathbf{BC}$ benchmark. We include this benchmark to analyze the effectiveness of intrinsic rewards that solely focus on social influence as intrinsic motivation proposed in~\cite{ICML2019_Social_MARL_GitHub}, without incorporating context-aware weights.
All algorithms utilize the standard PPO framework~\cite{ppo_classic,code_keras_ppo}, and are evaluated with corresponding human models tailored for individual scenarios. 
For fair comparisons, all algorithms share the same neural network architecture and hyperparameter settings in the same scenario.

\begin{figure*}[!t]
    \centering
    \begin{subfigure}[t]{5.9cm}
        \includegraphics[width=5.3cm]{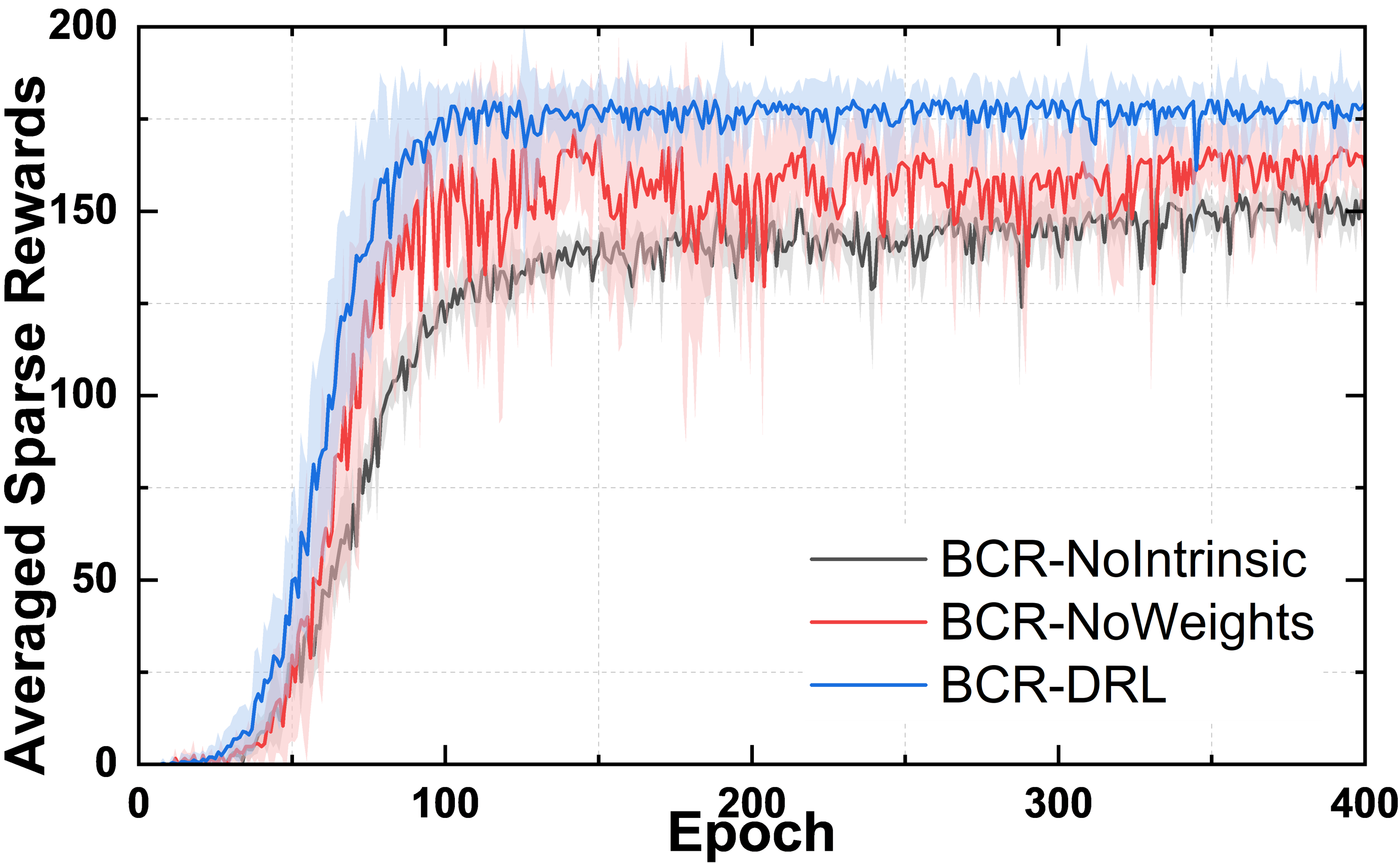}
        \subcaption{\textit{Cramped Room}.}
        \label{fig_ablation_CR}
    \end{subfigure}
    \begin{subfigure}[t]{5.9cm}
        \includegraphics[width=5.3cm]{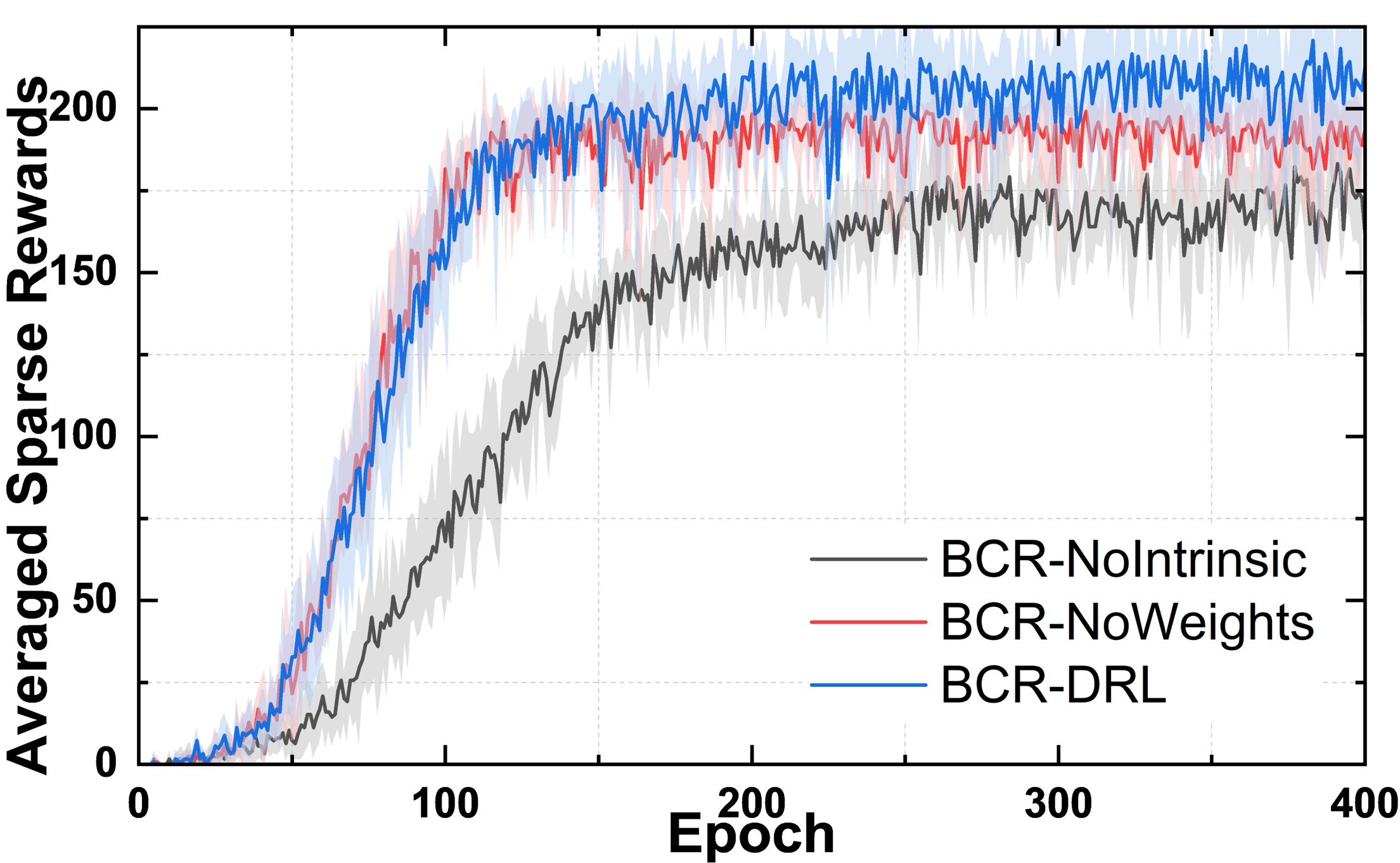}
        \subcaption{\textit{Asymmetric Advantages}.}
        \label{fig_ablation_AA}
    \end{subfigure}
    \begin{subfigure}[t]{5.9cm}
        \includegraphics[width=5.5cm]{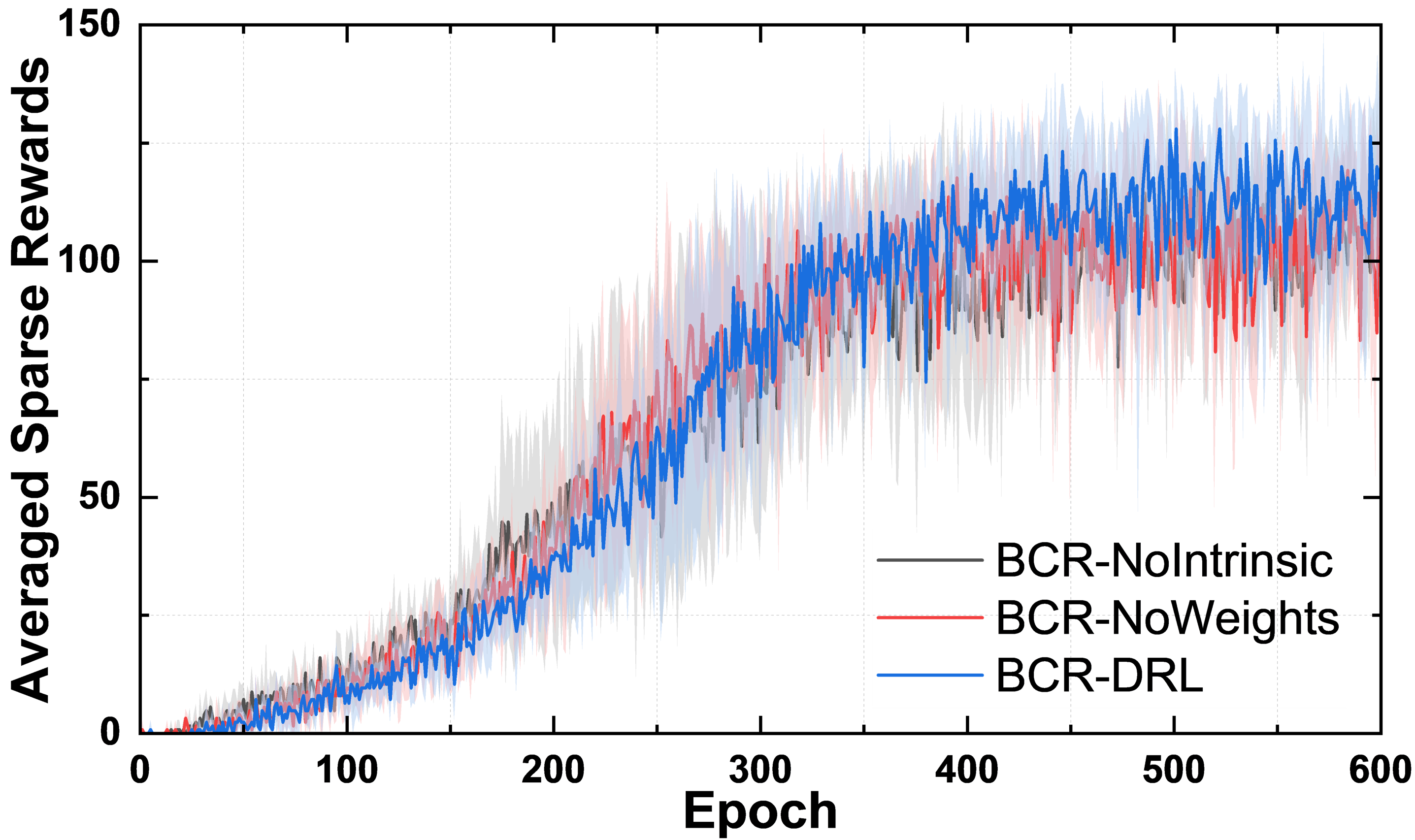}
        \subcaption{\textit{Coordination Ring}.}
        \label{fig_ablation_CO}
    \end{subfigure}
    \vspace{10pt}
    \caption{Ablation studies comparing our BCR-DRL full model against two variants, \texttt{BCR-NoIntrinsic} and \texttt{BCR-NoWeights}.
    The comparison between BCR-DRL and BCR-NoIntrinsic demonstrates the effectiveness of intrinsic rewards in improving performance. The comparison between BCR-DRL and BCR-NoWeights shows the stabilizing effect of our context-aware weighting mechanism, which mitigates potential instabilities from intrinsic rewards by preventing them from dominating later-stage learning.
    }
    \label{fig_ablation_study}
    \vspace{10pt}
\end{figure*}


\begin{figure*}[!ht]
    \centering
    \begin{subfigure}[t]{5.8cm}
        \includegraphics[width=5.8cm]{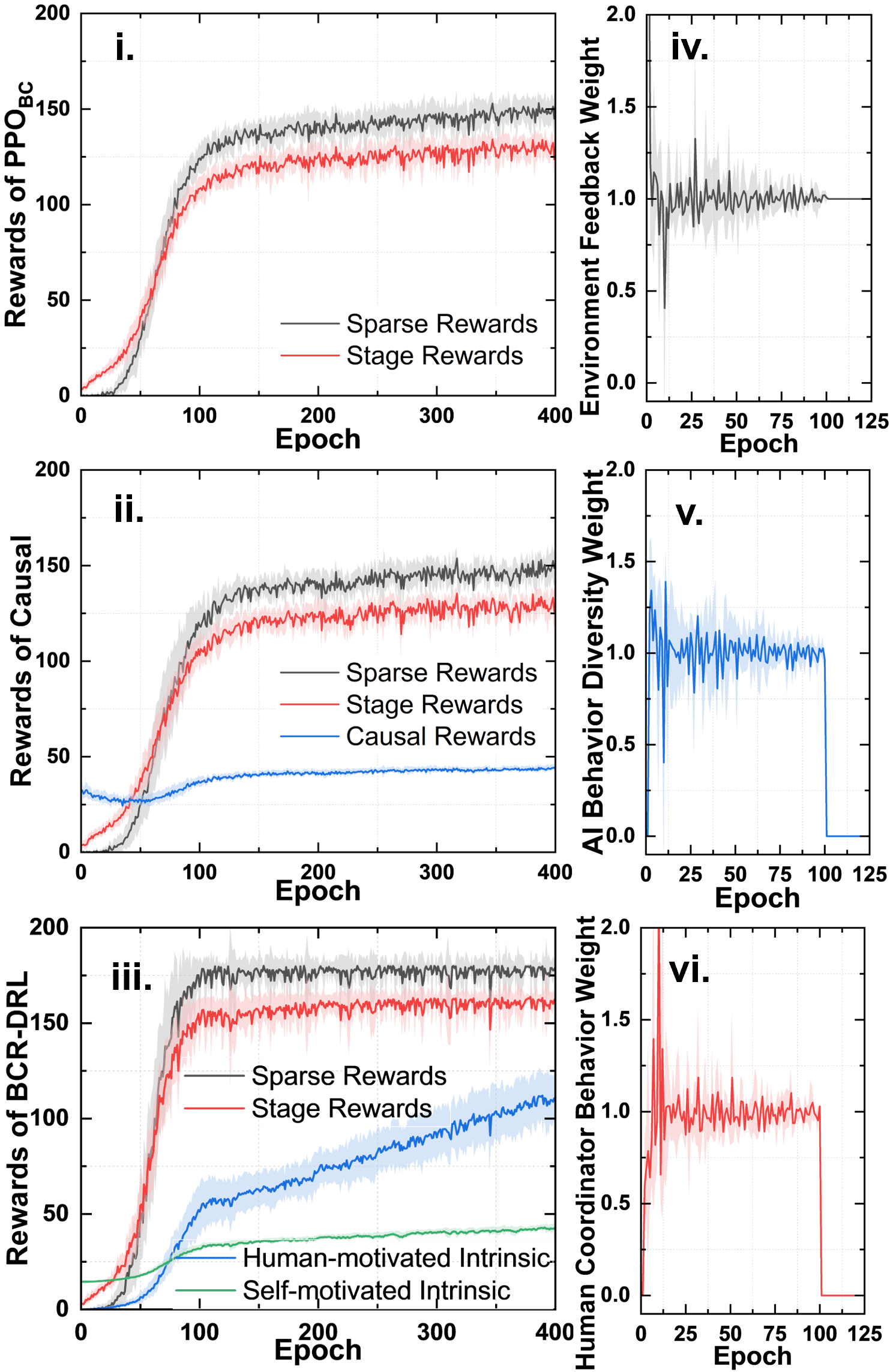}
        \vspace{-12pt}
        \subcaption{\textit{Cramped Room}.}
        \label{fig_cr_sim_results2}
    \end{subfigure}
    \begin{subfigure}[t]{5.9cm}
        \includegraphics[width=5.9cm]{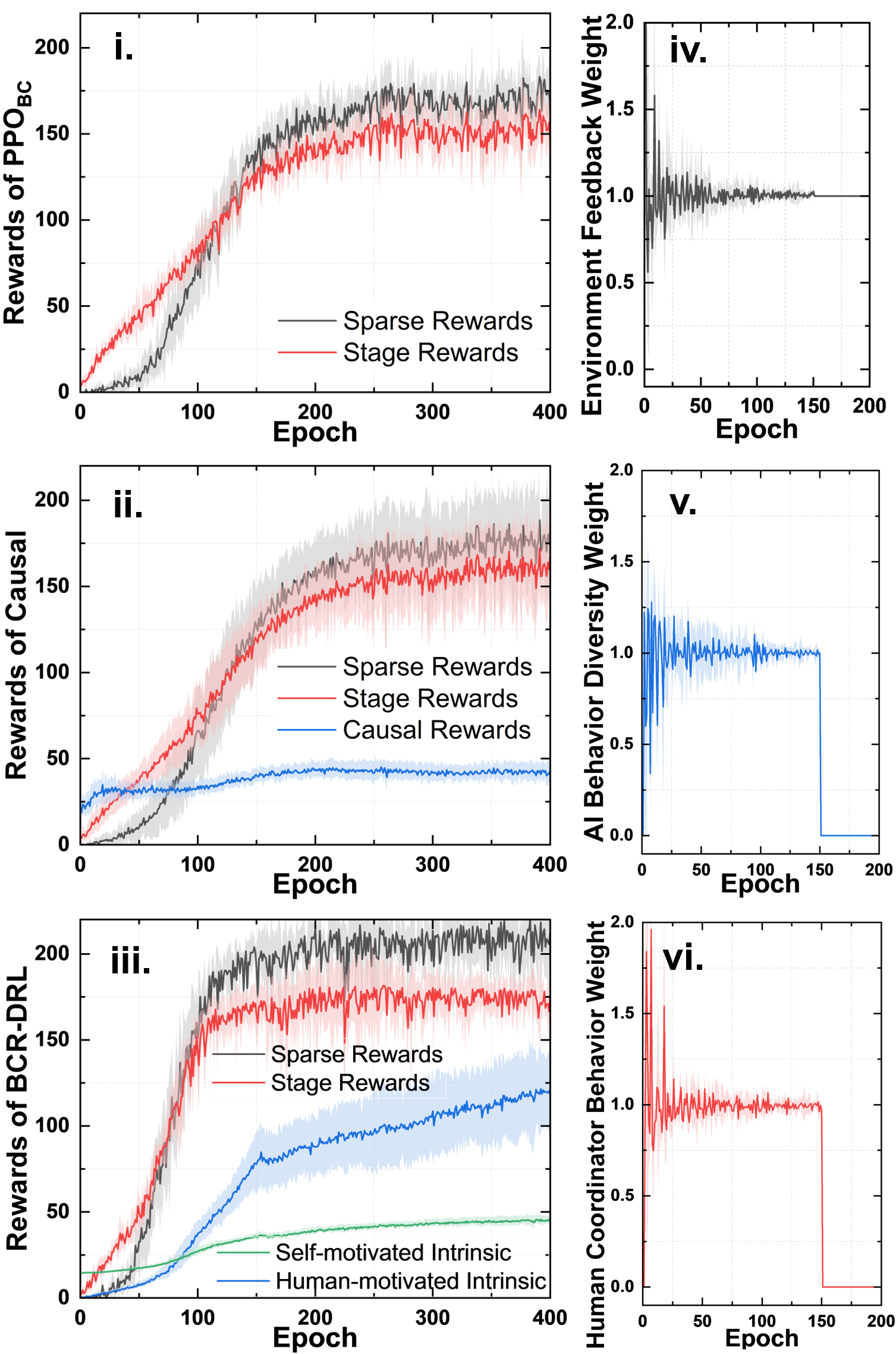}
        \vspace{-12pt}
        \subcaption{\textit{Asymmetric Advantages}.}
        \label{fig_aa_sim_results2}
    \end{subfigure}
    \begin{subfigure}[t]{5.9cm}
        \includegraphics[width=5.9cm]{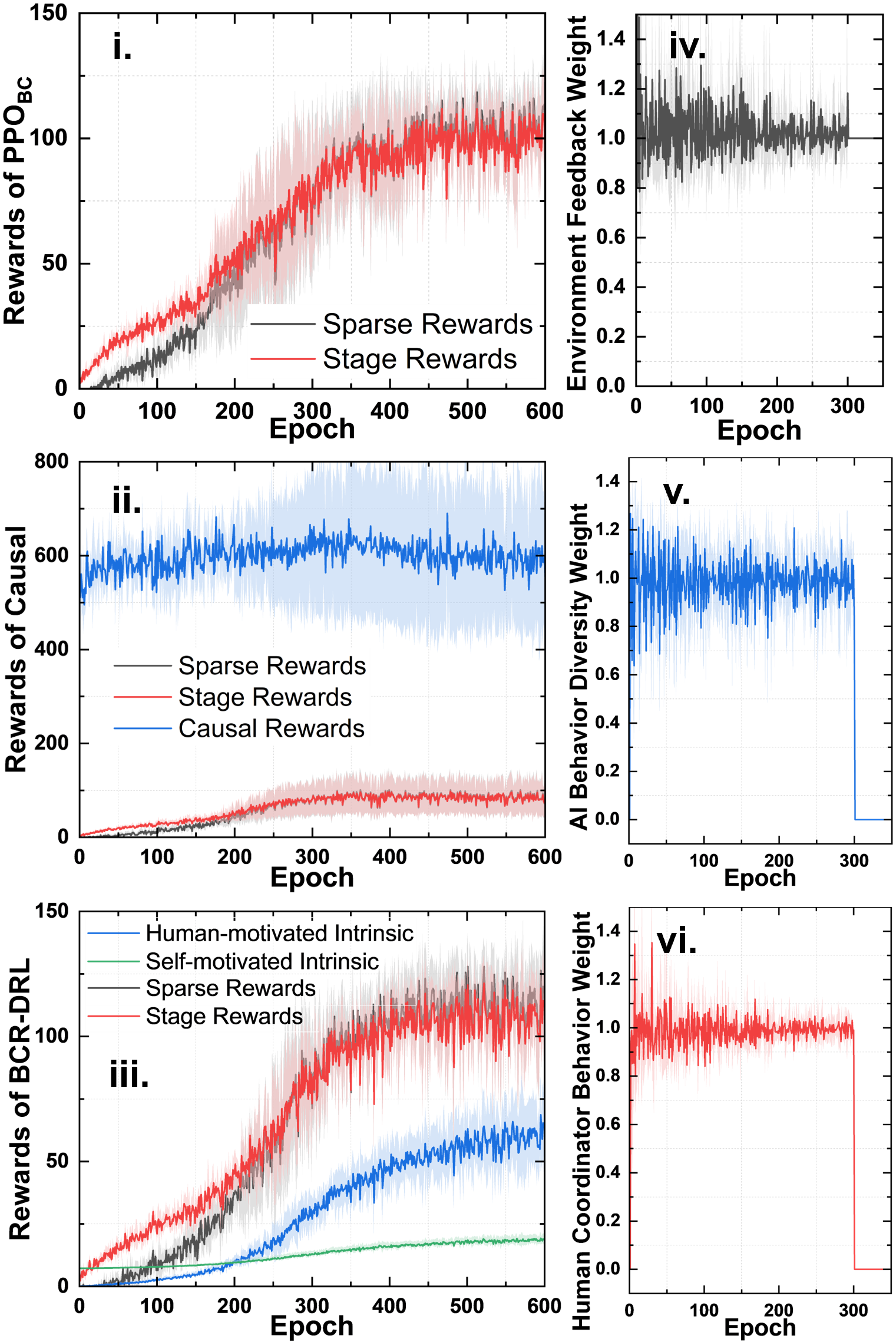}
        \vspace{-12pt}
        \subcaption{\textit{Coordination Ring}.}
        \label{fig_co_sim_results2}
    \end{subfigure}
    \vspace{10pt}
    \caption{Performance analysis of individual components in BCR. Impacts of the proposed methods in the \textit{Cramped Room}, \textit{Asymmetric Advantages}, and \textit{Coordination Ring} layouts, respectively. The intrinsic rewards are given in~(i)-(iii) of each figure, whilst the context-aware weights are given in~(iv)-(vi). Specifically, the Rewards correspond to three algorithms: PPO$_\mathbf{\mathrm{BC}}$~(i), Causal~(ii), and BCR-DRL~(iii); whilst the Weights compose the three context-aware weights of the BCR-DRL given in eq.~\eqref{eq_weights_final}, representing the environment feedback weight ${\kappa}_{n}^\mathcal{E}$~(iv), the AI behavior diversity weight ${\kappa}_{n}^\mathrm{A}$~(v), and the human coordinator behavior weight ${\kappa}_{n}^\mathrm{H}$~(vi), respectively.}
    \vspace{5pt}
    \label{fig_effectivenness_analysis}
\end{figure*}

\subsection{Performance on Overcooked Environment}
The overcooked environment~\cite{NeurIPS2019_Overcooked_GitHub} presents cooking tasks to be accomplished by coordinated actions of human and AI agents within limited timesteps. Specifically, a human agent and an AI agent work together to prepare as many onion soups as possible within a limited number of timesteps. A shared sparse reward ($r_t^{\mathrm{E}}$ given in eq.~\eqref{eq_reward_ex}) of $20$ points will be granted for both agents upon each successfully served onion soup, which requires completing a sequence of actions, including picking up onions from specified locations, placing three onions into a pot, picking up the soup with a plate after cooking it for $20$ timesteps, and serving the soup to a designated area.

We evaluate our BCR-DRL approach across three distinct Overcooked layouts, each highlighting different aspects of HAIC. Our experiments employ model-based HAIC using human models trained through behavior cloning algorithms. Each layout utilizes different sets of real human interaction data sourced from~\cite{code_github_Overcooked,NeurIPS2019_Overcooked_GitHub}, with separate human models used during training and testing phases to investigate robustness. Our implementation leverages the Gym-compatible Overcooked environment with Tensorflow~\cite{code_github_Overcooked}.

To ensure statistical reliability and demonstrate the stability of our approach, including the context-aware weighting mechanism, all experimental results are based on comprehensive statistical analysis across multiple independent runs. Specifically, for each experimental layout, we provide statistical experimental training and testing results, based on $5$ independent training runs and $2000$ independent testing episodes, respectively. This rigorous experimental protocol ensures that our reported performance improvements and the stability of context-aware weights are statistically significant and reproducible across different training conditions.




\subsubsection{Sparse Reward Performance Comparison}\label{section_sim_sparse}
Fig.~\ref{fig_all_sparse_reward} shows the comparative performance of sparse rewards between our BCR-DRL and the benchmark algorithms.

\textbf{\textit{Cramped Room.}}
As shown in Fig.~\ref{fig_cr_sim_results1}, our BCR-DRL consistently outperforms benchmark methods, and achieves around $20\%$ higher average sparse rewards in both training and testing phases. 
While the PPO$_\mathrm{BC}$ benchmark provides a solid foundation through its focus on the extrinsic rewards, and the Causal benchmark offers valuable insights through counterfactual actions of the ego AI agent, our BCR-DRL builds upon these approaches by designing a more effective and efficient solution for HAIC in this shared-space HAIC scenario.

\textbf{\textit{Asymmetric Advantages.}}
Fig.~\ref{fig_aa_sim_results1} shows that our BCR-DRL achieves consistently higher sparse rewards compared to both benchmarks, during both training and testing phases. The algorithm's effectiveness in this layout is particularly evident in its convergence speed. 
Using a 10-epoch moving average and a $90\%$ plateau threshold, the BCR-DRL converges at approximately 130 epochs, whereas the benchmarks converge at around 210 epochs, indicating an approximately $38\%$ improvement on the sample efficiency.
Interestingly, for the testing results, Fig.~\ref{fig_aa_sim_results1} reveals a significant performance drop in the Causal benchmark compared with the other two algorithms. This performance gap can be attributed to the difference between training and testing human models, which introduces distributional shifts that the Causal benchmark fails to handle effectively. This observation underscores the robustness side of our context-aware weighting mechanism in HAIC, particularly when intrinsic rewards are integrated to enhance DRL exploration in Fig.~\ref{fig_aa_sim_results1}.

\textbf{\textit{Coordination Ring.}}
As shown in Fig.~\ref{fig_co_sim_results1}, while BCR-DRL still outperforms the benchmarks, the performance improvement is less pronounced compared to the previous two layouts. This can be attributed to the confined nature of the \textit{Coordination Ring}, which naturally constrains the state space and enables the PPO$_\mathrm{BC}$ benchmark to achieve relatively comprehensive exploration even without sophisticated reward mechanisms.
An interesting observation is that the Causal benchmark exhibits higher variance compared to both BCR-DRL and PPO$_\mathrm{BC}$, suggesting that purely partner behavior-motivated intrinsic rewards may be less stable in environments with a confined space that is easy to explore comprehensively.
These observations in the \textit{Coordination Ring} layout suggest that while traditional PPO$_\mathrm{BC}$ demonstrates competitive performance in confined environments that are relatively straightforward to explore comprehensively, BCR-DRL offers a more robust and generalizable approach, capable of adapting to the diverse challenges inherent in HAIC scenarios.



\subsubsection{Ablation Studies}\label{section_ablation_study}
Our ablation study examines BCR's key components across three Overcooked layouts by comparing our full model against: \texttt{BCR-NoIntrinsic} (without intrinsic rewards) and \texttt{BCR-NoWeights} (with intrinsic rewards but without context-aware weights).
Fig.~\ref{fig_ablation_study} demonstrates that component impact varies according to the coordination requirements of each layout. In the \textit{Cramped Room} (Fig.~\ref{fig_ablation_CR}), \texttt{BCR-NoWeights} initially matches the full model but becomes unstable after 100 epochs, confirming our design intuition from Section~\ref{section_context_aware_weights} that unweighted intrinsic rewards become detrimental during exploitation phases. For \textit{Asymmetric Advantages} (Fig.~\ref{fig_ablation_AA}), \texttt{BCR-NoWeights} performs adequately but underperforms during later training stages when balanced reward emphasis becomes critical. In the \textit{Coordination Ring} (Fig.~\ref{fig_ablation_CO}), all variants achieve similar performance, indicating that confined spaces require less sophisticated exploration.
Note that \texttt{BCR-NoIntrinsic} is equivalent to the PPO$_\mathrm{BC}$ benchmark (see Section~\ref{section_sim_sparse} for detailed analysis). These ablation studies validate that dual intrinsic rewards enhance critical exploration in the early training stages, while context-aware weights improve stability during exploration-exploitation transitions, with benefits most significant in scenarios requiring comprehensive exploration and extensive agent coordination.

\subsubsection{Effectiveness Analysis on BCR Design}\label{section_effectivness_analysis}
As shown in Fig.~\ref{fig_effectivenness_analysis}, we analyze the effectiveness of BCR's key components through their training curves. We recall that the definitions of all components are given in Section~\ref{section_design_BCR}.

\textbf{\textit{Cramped Room.}}
In Fig.~\ref{fig_cr_sim_results2}(ii)-(iii), BCR-DRL consistently outperforms the Causal benchmark in intrinsic rewards, demonstrating enhanced exploration of human behavior patterns. 
The component weight evolution in Figs.~\ref{fig_cr_sim_results2}(iv)-(vi) reveals a systematic adaptation process. During early training, environment feedback (iv) and behavior diversity weights (v) dominate as the AI effectively leverages direct environmental information in this shared space. As training progresses, human coordinator behavior weights (vi) gradually increase, reflecting the AI's focus shifting with the increase in human behavioral data. This adaptive weighting illustrates how the algorithm efficiently transitions from environmental exploration to HAIC in a shared space.

\textbf{\textit{Asymmetric Advantages.}}
Fig.~\ref{fig_aa_sim_results2}(iii) shows human-motivated intrinsic rewards with a distinctive inflection point, reflecting the context-aware weighting mechanism's ability to modulate reward signals during the exploration-exploitation transition.
Unlike in \textit{Cramped Room}, the human coordinator behavior weights (vi) reach significant values earlier in the training process. This stems from the environment's spatial separation and asymmetric resource distribution, which make human behaviors informative for policy learning. The adaptive process reflects how BCR-DRL improves AI's adaptation in environments with spatial differences and asymmetric access to critical resources.

\textbf{\textit{Coordination Ring.}}
Fig.~\ref{fig_co_sim_results2} shows distinct patterns, with the Causal benchmark exhibiting significantly larger intrinsic rewards with high variance, explaining its unstable performance in Fig.~\ref{fig_co_sim_results1}. BCR-DRL's human-motivated intrinsic reward maintains relatively consistent levels throughout training, suggesting this confined space is naturally easier to explore, limiting BCR's exploration advantages. 
Context-aware weights demonstrate greater stability compared to other layouts, with slightly lower environmental feedback weights and higher AI behavior diversity weights initially, indicating BCR-DRL prioritizes diverse movement strategies over environment exploration in early training.

\subsection{Hyperparameter Selection and Sensitivity Methodology}\label{section_hyperparameter_analysis}
We present a systematic analysis of BCR-DRL's critical hyperparameters: intrinsic reward truncation epoch ($N_{\mathrm{th}}$), intrinsic reward coefficients ($\lambda^\mathrm{A}, \lambda^\mathrm{H}$), and softmax temperature ($\lambda^{\mathrm{R}}$). Complete configurations are provided in our code repository at https://github.com/hxheart/BCR-DRL.

\textbf{\textit{Intrinsic Reward Truncation Epoch} ($N_{\mathrm{th}}$).}
This parameter in eq.~\eqref{eq_weights_final} governs the exploration-exploitation transition. The optimal value correlates with sparse reward improvement rates: rapidly increasing rewards indicate insufficient exploration, requiring later truncation, while plateaued rewards suggest earlier transition to exploitation. For \textit{Cramped Room}, we set $N_{\mathrm{th}} = 100$ based on the training curve inflection point (Fig.~\ref{fig_cr_sim_results1}), as Fig.~\ref{fig_ablation_CR} shows that $N_{\mathrm{th}} = 150$ leads to performance degradation from over-exploration. Following similar analysis, we set $N_{\mathrm{th}} = 150$ for \textit{Asymmetric Advantages} and $N_{\mathrm{th}} = 300$ for \textit{Coordination Ring}. This parameter is sensitive to environment-specific reward dynamics, as evidenced by the wide variation in optimal values across environments (100-300 epochs), requiring calibration rather than using a universal setting.

\textbf{\textit{Intrinsic Reward Coefficients} ($\lambda^\mathrm{A}$, $\lambda^\mathrm{H}$).}
These coefficients in eqs.~\eqref{eq_reward_effect_from_AIself} and~\eqref{eq_reward_effect_from_human} calibrate AI self-motivated and human-motivated intrinsic rewards. Optimal configuration maximizes exploration while maintaining effective exploitation. We maintain $\lambda^\mathrm{A} > \lambda^\mathrm{H}$ to compensate for lower human action frequency compared to AI actions, balancing both agents' behavioral influence. For \textit{Cramped Room} and \textit{Asymmetric Advantages}, empirical testing established $\lambda^\mathrm{A}=1$ and $\lambda^\mathrm{H}=0.02$. The spatially confined \textit{Coordination Ring}, which facilitates inherent exploration even in traditional PPO, benefits from reduced coefficients: $\lambda^\mathrm{A}=0.2$ and $\lambda^\mathrm{H}=0.01$. The coefficients show sensitivity to environment characteristics, particularly spatial constraints, though the relative ratio $\lambda^\mathrm{A} > \lambda^\mathrm{H}$ remains consistently effective across different scenarios.

\textbf{\textit{Softmax Temperature} ($\lambda^{\mathrm{R}}$):}
This parameter in eq.~\eqref{eq_weights} ensures smooth adaptation among all three reward components in eq.~\eqref{eq_reward_bcr_drl}. We determined $\lambda^{\mathrm{R}}=3$ to allow each component to maintain proportional influence throughout training. This parameter demonstrates low sensitivity to environment variations, with the uniform value across all tested environments indicating robust performance without environment-specific tuning.



\subsection{Generalization Study}\label{experiments_Exploration_env}
To evaluate the generalization capability of BCR-DRL beyond the Overcooked domain and the behavior-cloned human models used in previous experiments, we introduce a new environment, \textit{Exploration}, which features a distinct task setting and a different human-agent behavior model.

As shown in Fig.~\ref{fig_exploration_env}, this environment consists of a grid populated with unexplored cells (blank), explored cells (green), and obstacles (black). A human agent (blue) and an AI agent (red) jointly explore all accessible cells in the environment. Agents can choose from five discrete actions: move up, down, left, right, or stay. The environment is episodic, with each episode spanning 400 timesteps, potentially containing multiple rounds of exploration.
Unlike prior experiments that relied on human-cloned policies learned from real gameplay data, we employ a synthetic human model with a stochastic policy and reduced temporal frequency (acting every 10 timesteps) based on real human reaction time. This introduces significant behavioral uncertainty and temporal asynchrony, creating a coordination scenario that differs from the Overcooked setups. The AI agent, by contrast, acts at every timestep and must adaptively coordinate its exploration strategy to complement the human's delayed and unpredictable actions. Please also refer to Section~\ref{section_benchmark_HAIC} for the analysis of HAIC aspects of this \textit{Exploration} environment.
The reward structure incentivizes efficient and cooperative behavior: agents receive +2 for discovering a new cell, -0.5 for revisiting an already explored cell, and -1 for invalid actions (including boundary violations, collisions with obstacles or the other agent, or redundant actions with no position change). When all accessible cells are explored, both agents receive a +20 sparse reward, and the environment resets while retaining the obstacle layout.

This setup challenges the AI agent to anticipate human movements, avoid redundancy, and dynamically adapt its behavior under conditions of uncertainty and limited observability. It thereby provides a controlled but difficult testbed for evaluating whether BCR-DRL can generalize to novel coordination dynamics and behavior patterns that diverge from its training conditions.
Experimental results in Fig.~\ref{fig_exploration_experiment} show that BCR-DRL consistently outperforms the benchmark approaches (PPO$_\mathrm{HM}$ and Causal), achieving higher cumulative rewards and sample efficiency. These findings validate BCR-DRL's effectiveness in generalizing across environments and partner behavior models, reinforcing its potential as a robust solution for real-world HAIC applications.

\begin{figure}[!t]
    \centering
    \begin{subfigure}[t]{2.7cm}
        \includegraphics[width=2.4cm, trim=0 12 0 0, clip]{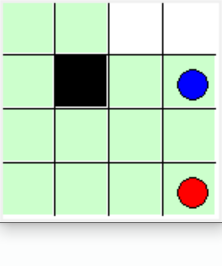} 
        \subcaption{\textit{Exploration} env.}
        \label{fig_exploration_env}
    \end{subfigure}
    \begin{subfigure}[t]{5.5cm}
        \includegraphics[width=5.2cm]{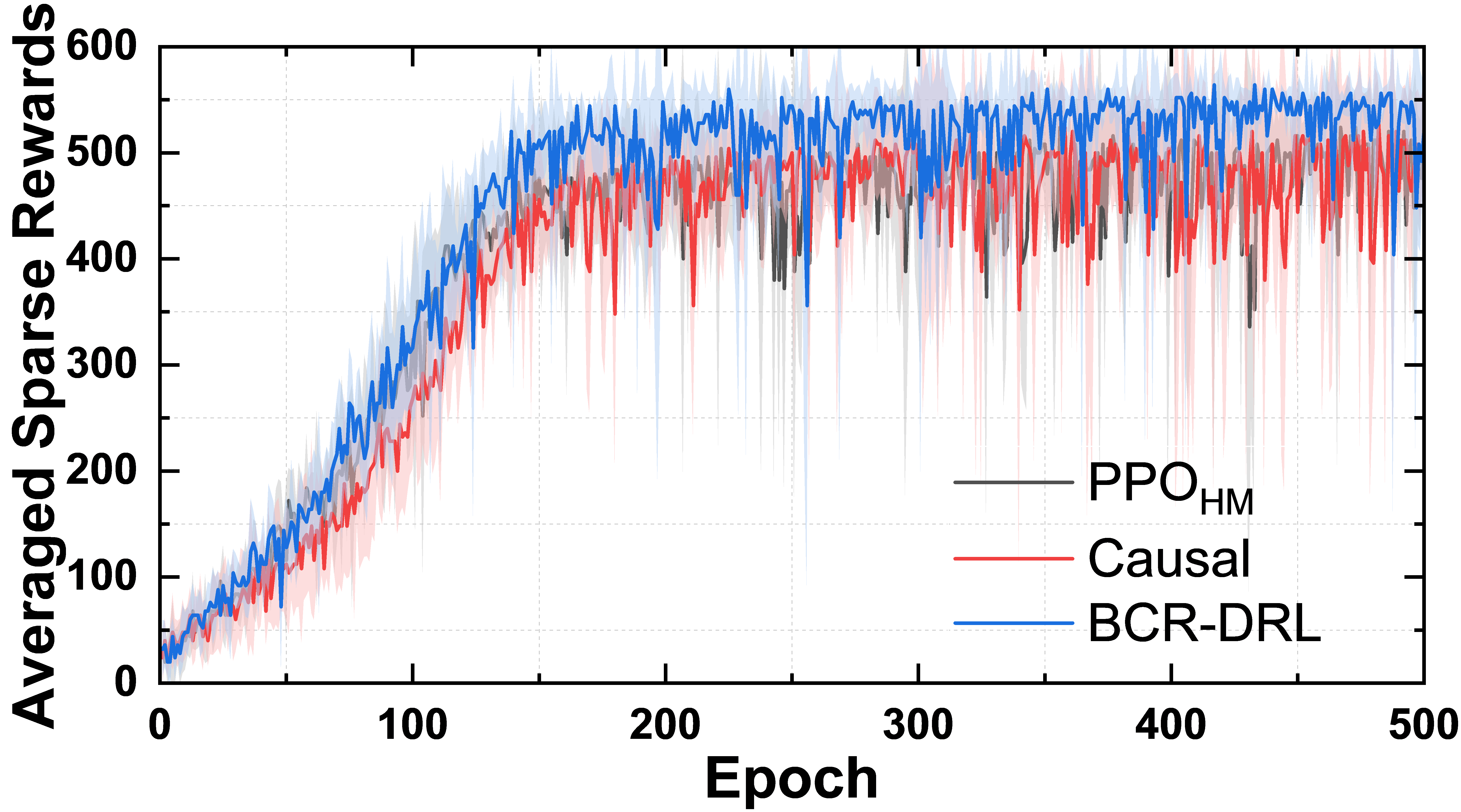}
        \subcaption{Averaged sparse rewards.}
        \label{fig_exploration_experiment}
    \end{subfigure}
    \vspace{10pt}
    \caption{Performance comparison based on BCR-DRL and the benchmark algorithms on the \textit{Exploration} environment.}
    \vspace{20pt}
\end{figure}

\section{Conclusion, Limitations, and Future Work}\label{section_conclusion}
We introduced a BCR-DRL to facilitate HAIC. Supplementing traditional extrinsic rewards, our BCR incorporated an innovative dual intrinsic rewarding scheme to facilitate comprehensive exploration and a novel context-aware weighting mechanism to optimize exploration and exploitation. Extensive experimental results across three layouts of the Overcooked environment demonstrated that BCR-DRL can increase rewards by approximately $20\%$ and improve sample efficiency by approximately $38\%$. Testing experiments underscored the algorithm's robustness, and the generalization study on the Exploration environment further validated BCR-DRL's generalization abilities across different coordination aspects of human and AI. 
Despite these, our approach has limitations. In HAIC environments with minimal exploration challenges, such as the \textit{Coordination Ring} layout, the benefits of BCR-DRL diminish. This suggests that the strength of BCR-DRL lies in tasks where uncertainty, complementarity, or partial observability pose significant coordination challenges. Furthermore, our current evaluation relies on simulated human models, this gap limits the immediate applicability of our findings to real-world HAIC scenarios. 
Future work will focus on deploying BCR-DRL in human-in-the-loop experiments. Based on our recently conducted preliminary real-human experiments, we have observed interesting phenomena that provide insights for potential readers. Specifically, we found that BCR-DRL coordinates more effectively with real humans compared to simulated models, with rewards increasing as humans become familiar with the environment due to their adaptability. In contrast, behavior cloning-based human models sometimes exhibit freezing behaviors since they lack adaptability. This will enable deeper insights into the practical implications and limitations of model-based reinforcement learning in HAIC, particularly regarding how human variability and adaptability impact coordination performance.



\bibliography{refs.bib}

\end{document}